\DeclareMathOperator*{\argmax}{argmax}
\theoremstyle{definition}
\newtheorem{theorem}{Theorem}[section]
\title{Parallel Scheduled Sampling}
\author{
    Daniel Duckworth\thanks{Equal contribution}  \\
    Google Brain \\
    \texttt{duckworthd@google.com} \\
    \And
    Arvind Neelakantan\footnotemark[1] \\
    Google Brain \\
    \texttt{aneelakantan@google.com} \\
    \And
    Ben Goodrich \\
    Google Brain \\
    \texttt{bgoodrich@google.com} \\
    \And
    Lukasz Kaiser \\
    Google Brain \\
    \texttt{lukaszkaiser@google.com} \\
    \And
    Samy Bengio \\
    Google Brain \\
    \texttt{bengio@google.com} \\
}
\begin{document}
    \maketitle
    \begin{abstract}
      Auto-regressive models are widely used in sequence generation problems. The output sequence is typically generated in a predetermined order, one discrete unit (pixel or word or character) at a time. The models are trained by {\it teacher-forcing} where ground-truth history is fed to the model as input, which at test time is replaced by the model prediction. Scheduled Sampling \citep{bengio2015scheduled} aims to mitigate this discrepancy between train and test time  by randomly replacing some discrete units in the history with the model's prediction. While teacher-forced training works well with ML accelerators as the computation can be parallelized across time, Scheduled Sampling involves undesirable sequential processing. In this paper, we introduce a simple technique to parallelize Scheduled Sampling across time. Experimentally, we find the proposed technique leads to equivalent or better performance on  image generation, summarization, dialog generation, and translation compared to teacher-forced training. In dialog response generation task, Parallel Scheduled Sampling achieves 1.6 BLEU score (11.5\%) improvement over teacher-forcing while in image generation it achieves 20\% and 13.8\% improvement in Frechet Inception Distance (FID)  and Inception Score (IS)  respectively. Further, we discuss the effects of different hyper-parameters associated with Scheduled Sampling on the model performance.
    \end{abstract}

    \section{Introduction}
    Auto-regressive models are a popular choice for generating sequences of any kind including audio \citep{wavenet}, images \citep{pixelrnn}, and text \citep{seq2seq,enc-dec}. Here, the joint probability of the sequence is factorized in a pre-determined order during train and test time. For example, auto-regressive models for text generation factorize the joint probability left-to-right. The text sequence is generated by a decoder network left-to-right, one token (word or word-piece or character) at a time and are widely used in text generation tasks such as summarization \citep{wiki}, machine translation \citep{seq2seq} and dialog response generation \citep{multiwoz} in the encoder-decoder \citep{enc-dec,seq2seq} setting. Such  models are typically trained by {\it teacher-forcing} \citep{williams1989learning} where ground-truth history is fed to the model as input, which at test time is replaced by the model prediction. Auto-regressive models applied in any domain suffer from this train-test time discrepancy.

    Scheduled Sampling \citep{bengio2015scheduled} aims to mitigate the discrepancy between train and test time in teacher-forcing  by randomly replacing some tokens in the history with the model's prediction. More concretely, at a given time step in generating the output sequence, the model is conditioned either on ground-truth or model prediction from the previous time-step with some probability. The probability of selecting model predicted token is gradually increased as training progresses. This procedure potentially allows the model to recover from its own errors, and \citet{bengio2015scheduled} observe better empirical performance in natural language parsing, image captioning, and speech recognition compared to teacher-forced training. Scheduled Sampling has also been used to get better performance in other tasks such as video prediction \citep{video}, knowledge base completion \citep{kb} and piano music transcription \citep{piano}.

    A key bottleneck in training models with Scheduled Sampling is its inherently sequential nature. Unlike teacher-forcing, tokens must be processed one time-step at a time. The sequential procedure makes Scheduled Sampling impractical for training neural networks, particularly on problems involving long sequence generation. In this work, we describe a simple technique to parallelize Scheduled Sampling. Given an input example, we first generate an entire model prediction sequence in parallel by conditioning on ground-truth history (equivalent to forward-pass of teacher-forcing). Then, we employ a (parallel) mixing step where we generate a new sequence whose token at every time step is either from the model prediction or the ground-truth. Finally, we perform training as in teacher-forcing by conditioning on the sequence obtained from mixing. In Section~\ref{sec:par_ss} we show that by performing multiple passes of parallel prediction and mixing, we obtain a conditioning sequence that converges to a sample decode from the model.

    Our key contributions are,

    \begin{itemize}
      \item{ A novel, fully parallelizable approach to reducing train-test discrepency of auto-regressive models. We find the method produces the same empirical benefits of Scheduled Sampling while using as little as \textit{0.3\% of the training time}.}
      \item{We prove equivalence of the proposed approach to Scheduled Sampling under certain choices of hyperparameters. This recovers the clear interpolation between training-time teacher-forcing and test-time decoding described in \citet{bengio2015scheduled}.}
      \item{We extensively evaluate our approach on four auto-regressive tasks in text and image domains. We find that Parallel Scheduled Sampling matches Scheduled Sampling's benefits, takes massively less compute time, and significantly improves performance compared to teacher-forcing on most tasks. In dialog response generation task, Parallel Scheduled Sampling achieves 1.6 BLEU score (11.5\%) improvement over teacher-forcing while in image generation it achieves 20\% and 13.8\% improvement in Frechet Inception Distance (FID)  and Inception Score (IS)  respectively.}
    \end{itemize}


    \section{Method}



    Our proposed technique can be applied to both conditional and unconditional auto-regressive generative models. For notational simplicity, we consider the task of conditional sequence generation. The training set is given in terms of $N$ input-output sequences $\{ (x^i, y^i) \}_{i=1}^n$, where $x^i$ is the input and target $y^i$ is the desired output. The target $y^i$ is a variable-length sequence of $T_i$ tokens (or pixels), $( y^i_1, y^i_2, \ldots, y^i_{T_i})$, whereas $x^i$ may be variable-length (as in translation) or fixed-length (as in image captioning). The goal is to learn a model that accurately predicts $y^i$ given $x^i$. We use $y_{1:t}$ to denote the sequence of tokens $(y_1, y_2, \ldots, y_t)$.

    \subsection{Teacher-Forcing and Decoding}


    Given an input $x$ and a target $y$, the log-probability of the target can be decomposed autoregressively:
    \begin{align*}
        P(y | x) &= \prod_{t=1}^{T} P(y_t | y_{1:t-1}, x)
    \end{align*}

    Auto-regressive sequence generation models learn to assign high likelihood to token $y_t$ given previous target tokens $y_{1:t-1} = (y_1, \ldots, y_{t-1})$ and inputs $x$ via a learned likelihood model $p_{\theta}$. Neural language models such as RNNs \citep{rnn} and Transformer \citep{vaswani2017attention} adopt left-to-right decomposition while image generation models \citep{pixelrnn,image_transformer} adopt a raster-scan order.  

    Such models are typically trained with teacher-forcing \citep{williams1989learning}. In teacher-forcing, the log likelihood of the training set is directly maximized,

    \begin{equation}
    \label{eq:loss}
        \theta^{*}
        = \argmax_{\theta} \mathcal{L}_{\text{tf}}(\theta)
        = \argmax_{\theta} \sum_{i=1}^N \sum_{t=1}^{T_i} \log p_{\theta}(y^i_t | y_{1:t-1}^i, x^i)
    \end{equation}

    Importantly, teacher-forcing conditions on \textit{gold} target prefixes $y_{1:t-1}$, enabling backpropagation through all timesteps with a single pass of inference.

    At inference time, beam or sample decoding is often used to generate a candidate target $\hat{y}^i$. In this regime, target tokens are generated one at a time while conditioning on previously-generated tokens.

    \begin{align*}
        \hat{y}_t \sim p_{\theta}(\hat{y}_t | \hat{y}_{1:t-1}, x)
        \quad \text{ or } \quad
        \hat{y}_t = \argmax_{y} p_{\theta}(y | \hat{y}_{1:t-1}, x)
    \end{align*}

    A potential failure mode for teacher-forcing-trained models is in conditioning on previously unobserved target prefixes $\hat{y}_{1:t-1}$. As the model has not conditioned on these prefixes at training time, it may generate bland, repetitive, or nonsensical candidate targets \citep{holtzman2019curious}.

    \subsection{Scheduled Sampling}



    Scheduled Sampling \citep{bengio2015scheduled}, hereafter Sequential Scheduled Sampling is a training technique designed to bridge the gap between teacher-forcing and sample decoding. In its simplest form, Sequential Scheduled Sampling generates tokens $\tilde{y}_{1:t}$ and conditions on these target prefixes during training. Sequential Scheduled Sampling uses the same objective function as teacher-forcing (Equation \ref{eq:loss}) except the conditioning tokens $\tilde{y}_{1:t}$ are a random mixture of gold tokens $y_{1:t}$ and sampled tokens $\hat{y}_{1:t}$ instead of gold tokens $y_{1:t}$. See Algorithm~\ref{alg:seq_ss} for implementation.

    \begin{algorithm}
    \caption{Sequential Scheduled Sampling (single example)}
    \label{alg:seq_ss}
    \begin{algorithmic}
        \ForAll{timesteps $t = 1, \ldots, T_i$}
            \State{Sample $\hat{y}_t \sim p_{\theta}(\hat{y}_t | \tilde{y}_{1:t-1}, x)$}
            \State{Choose next conditioning token,
                \begin{align*}
                    \tilde{y}_t = \begin{cases}
                        y_t       & \text{with probability $1-p$} \\
                        \hat{y}_t & \text{with probability $p$} \\
                    \end{cases}
                \end{align*}
            }
        \EndFor \\
        \Return{Accumulate loss $\sum_{t} \log p_{\theta}(y_t | \tilde{y}_{1:t-1}, x)$}
    \end{algorithmic}
    \end{algorithm}


      As $p \rightarrow 0$, we condition on $y_{1:t-1}$ as in teacher-forcing, and as $p \rightarrow 1$, we condition on $\hat{y}_{t:t-1}$ as in sample decoding. Typically a schedule will be used to gradually increase $p$ over the course of training. As illustrated in \citet{bengio2015scheduled}, Scheduled Sampling leads to a performance improvement in a variety of language generation tasks.


    In spite of its benefits, Sequential Scheduled Sampling is inherently a \textit{sequential} algorithm: choosing conditioning token $\tilde{y}_t$ requires conditioning autoregressively on tokens $\tilde{y}_{1:t-1}$. While this is natural for sequential architectures such as RNNs and LSTMs, it is poorly suited to self-attending feed-forward models such as Transformer where inference for multiple timesteps can be carried out simultaneously.

    \subsection{Parallel Scheduled Sampling}
    \label{sec:par_ss}

    \begin{figure}
        \centering
        \includegraphics[width=8cm]{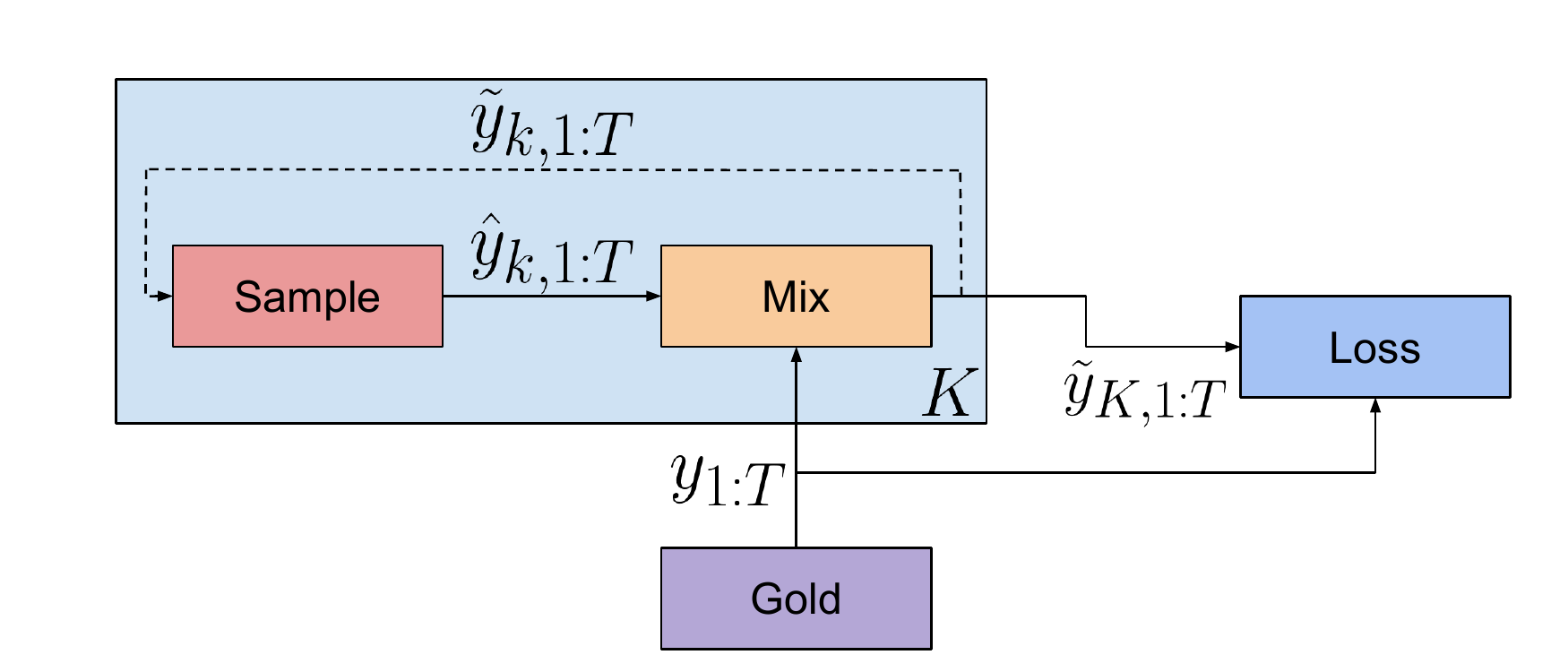}
        \caption{Parallel Scheduled Sampling. Conditioning tokens are resampled (red) and mixed (orange) with gold tokens (purple) over $K$ passes. Each pass conditions on the mixed tokens from the previous pass. Finally, the loss (blue) is calculated by conditioning on the mixed tokens from the final pass.}
        \label{fig:parss_diagram}
    \end{figure}


    We propose a natural extension to Sequential Scheduled Sampling called \textit{Parallel Scheduled Sampling}. Whereas Sequential Scheduled Sampling selects conditioning tokens one after another, we propose generating conditioning tokens for all timesteps \textit{in parallel} over the course of one or more passes. While this technique requires strictly more operations than Sequential Scheduled Sampling, it is better suited to hardware accelerators such as GPUs and TPUs \citep{jouppi2017datacenter}. Moreover, we find in our experiments that only a modest number of passes is necessary for improving model performance.

    Parallel Scheduled Sampling generates conditioning tokens for all timesteps \textit{simultaneously}. The procedure consists of multiple passes, each pass consisting of parallel sampling and mixing steps (Figure \ref{fig:parss_diagram}).  In the first pass, the algorithm conditions on gold tokens $y_{1:t}$, generating tokens $\hat{y}_{1:t}$ i.i.d. according to $p_{\theta}(\hat{y}_{t} | y_{1:t-1}, x)$. Sampling tokens in the first pass is equivalent to the forward-pass of teacher-forcing. The sampled tokens, $\hat{y}_{1:t}$, are mixed (in parallel) with gold tokens, $y_{1:t}$, to produce conditioning tokens for the next pass, $\tilde{y}_{1:t}$.

    We now describe the multiple-pass procedure. Let $\hat{y}_{k,1:t}$, and  $\tilde{y}_{k,1:t}$ denote sampled and mixed tokens respectively on pass $k$. The mixed tokens from pass $k$, $\tilde{y}_{k,1:t}$, are used for conditioning on pass $k+1$ in place of gold tokens $y_{1:t}$. Finally, the loss is calculated as before, conditioning on the final mixture of gold and sampled tokens $\tilde{y}_{K,1:t}$. See Algorithm~\ref{alg:par_ss} for implementation.

    \begin{algorithm}
    \caption{Parallel Scheduled Sampling (single example)}
    \label{alg:par_ss}
    \begin{algorithmic}
        \State{Set $\tilde{y}_{0, t} = y_t$.}
        \ForAll{passes $k = 1, \ldots, K$}
            \State{Sample $\hat{y}_{k,t} \sim p_{\theta}(\hat{y}_{k,t} | \tilde{y}_{k-1,1:t-1}, x)$ for all timesteps $t$ in parallel.}
            \ForAll{timesteps $t$ in parallel}
                \If{$t < k$}
                    \State{Copy $\tilde{y}_{k,t} = \tilde{y}_{k-1,t}$}
                \Else
                    \State{Sample $\tilde{y}_{k,t} = \begin{cases}
                                y_t                 & \text{with probability $1-p$} \\
                                \hat{y}_{k,t}       & \text{with probability $p$} \\
                            \end{cases}$
                    }
                \EndIf
            \EndFor
        \EndFor \\
        \Return{Accumulate loss $\sum_{t} \log p_{\theta}(y_t | \tilde{y}_{K,1:t-1}, x)$}
    \end{algorithmic}
    \end{algorithm}


    Finally, we prove that by running the sampling and mixing steps for multiple passes as described in Algorithm~\ref{alg:par_ss}, the final sample from Parallel Scheduled Sampling converges to a random sample decode from the model when $p=1$ and $K \geq T$.

    \begin{theorem}
    Consider a sequence of tokens $z = (z_1, z_2, \ldots, z_{T})$ of length $T$. Let $p = 1$ and $K \geq T$ be fixed. Then the likelihood of $z_{1:T}$ under Parallel Scheduled Sampling's proposal distribution\footnote{We drop conditioning on $x$ in the following for conciseness} over conditioning tokens on pass $K$, $q_{\theta}^{K}(z_{1:T})$, is identical to random sample decoding's, $p_{\theta}(z_{1:T})$,

    \begin{align*}
       q_{\theta}^{K}(z_{1:T}) = p_{\theta}(z_{1:T})
    \end{align*}
    \end{theorem}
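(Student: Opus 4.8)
The plan is to prove the claim by induction on the pass index $k$, showing that after pass $k$ the prefix $\tilde{y}_{k,1:k}$ is distributed exactly as a sample decode from $p_\theta$, i.e.\ $\tilde{y}_{k,1:k} \sim \prod_{t=1}^{k} p_\theta(\tilde{y}_{k,t}\mid \tilde{y}_{k,1:t-1})$, while making no claim about the later coordinates. Since $p=1$, the mixing step is trivial: for $t \ge k$ we always take $\tilde{y}_{k,t} = \hat{y}_{k,t}$, and for $t < k$ we copy forward the already-frozen value $\tilde{y}_{k-1,t}$. So the only content is tracking how the ``correct'' prefix grows by one token per pass. Once this invariant is established, setting $k = K \ge T$ gives that all $T$ coordinates are frozen and correctly distributed, which is precisely $q_\theta^K(z_{1:T}) = p_\theta(z_{1:T})$.

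First I would set up the base case. On pass $k=1$, every $t \ge 1$ satisfies $t \ge k$, so $\tilde{y}_{1,t} = \hat{y}_{1,t}$ where $\hat{y}_{1,t} \sim p_\theta(\cdot \mid y_{1:t-1})= p_\theta(\cdot \mid \tilde{y}_{0,1:t-1})$. In particular the first coordinate $\tilde{y}_{1,1} \sim p_\theta(\tilde{y}_{1,1})$, which is the correct marginal for the first token of a sample decode; this establishes the invariant for $k=1$. For the inductive step, assume after pass $k$ that $\tilde{y}_{k,1:k}$ has the law of a length-$k$ sample-decode prefix. On pass $k+1$ we sample $\hat{y}_{k+1,t} \sim p_\theta(\cdot \mid \tilde{y}_{k,1:t-1})$ for all $t$ in parallel. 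For $t \le k$ the output is the copy $\tilde{y}_{k+1,t} = \tilde{y}_{k,t}$, so those coordinates retain their (correct) joint law. The key new coordinate is $t = k+1$: there $\tilde{y}_{k+1,k+1} = \hat{y}_{k+1,k+1} \sim p_\theta(\cdot \mid \tilde{y}_{k,1:k})$, and since by hypothesis the conditioning prefix $\tilde{y}_{k,1:k}$ is exactly a sample-decode prefix and the copy step means $\tilde{y}_{k+1,1:k} = \tilde{y}_{k,1:k}$, the pair $(\tilde{y}_{k+1,1:k}, \tilde{y}_{k+1,k+1})$ has law $\prod_{t=1}^{k+1} p_\theta(\cdot \mid \tilde{y}_{k+1,1:t-1})$, completing the induction.

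I would phrase this cleanly in terms of the proposal distribution $q_\theta^k$ over the whole length-$T$ sequence, factoring it as $q_\theta^k(z_{1:T}) = q_\theta^k(z_{1:k}) \cdot q_\theta^k(z_{k+1:T}\mid z_{1:k})$, and prove by induction that the first factor equals $p_\theta(z_{1:k})$. The copy rule ($t<k \Rightarrow \tilde{y}_{k,t}=\tilde{y}_{k-1,t}$) is what guarantees $q_\theta^{k}(z_{1:k-1}) = q_\theta^{k-1}(z_{1:k-1})$ so the inductive hypothesis transfers, and the fresh sampling at $t=k$ conditioned on $\tilde{y}_{k-1,1:k-1}$ supplies the one additional autoregressive factor $p_\theta(z_k \mid z_{1:k-1})$. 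Taking $k=K\ge T$ makes the prefix the entire sequence and the conditional factor vacuous, yielding the result.

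The main obstacle is not any hard estimate — there is essentially no computation — but rather being careful about exactly which conditioning sequence each sample is drawn against, and ensuring the inductive hypothesis is stated strongly enough (as a statement about the joint law of the frozen prefix, conditioned on nothing external) that the copy step genuinely preserves it. A secondary subtlety worth a sentence is that when $p=1$ the mixing ``$\begin{cases} y_t & 1-p \\ \hat{y}_{k,t} & p\end{cases}$'' always selects $\hat{y}_{k,t}$, so one must note the degenerate-mixing case is well-defined and that the copy branch for $t<k$ is the only place gold tokens could have survived — and by the time coordinate $t$ is first copied (pass $t+1$) it has already been overwritten by a sample at pass $t$, so no gold tokens appear in $\tilde{y}_{K,1:T}$ at all.
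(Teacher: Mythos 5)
Your proof is correct and follows essentially the same strategy as the paper's: a base case for the first token, an observation that the copy rule for $t<k$ preserves the law of the already-correct prefix across passes, and an inductive step showing the freshly sampled token at position $k$ contributes exactly the autoregressive factor $p_{\theta}(z_k \mid z_{1:k-1})$ because $p=1$. The only cosmetic difference is that you merge the paper's separate ``induction over $K$'' and ``induction over $t$'' into a single induction on the pass index, which if anything states the invariant more cleanly.
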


    \begin{proof}

    We begin by establishing notation. Let $p_{\theta}(z_{1:T})$ be the likelihood of a sequence $z_{1:T}$ according to random sample decoding. Let $q_{\theta}^{K}(z_{1:t})$ be the likelihood of the same according to Parallel Scheduled Sampling's proposal distribution on pass $K$.

    The proof proceeds by induction. First we show that the proposal distribution for the first token matches random sampling's on the first pass, $q_{\theta}^{1}(z_{1}) = p_{\theta}(z_{1})$. Then we show that if $q_{\theta}^{K}(z_{1:t}) = p_{\theta}(z_{1:t})$ holds for some $K$, it also holds for all $K' > K$. Finally, we show that if the previous statement holds, it also holds for tokens $z_{ 1:t+1}$ on pass $K+1$. Thus, it follows that the proposal distribution matches random sampling's for all $T$ tokens so long as $K \geq T$.

    \textit{Base Case}: Consider the proposal distribution for the first token on the first pass, $z_{1}$. As $p = 1$, the first token is sampled from $p_{\theta}(z_{1})$ by construction. Thus,
    \begin{align*}
        q_{\theta}^{1}(z_{1}) = p_{\theta}(z_{1})
    \end{align*}

    \textit{Induction over $K$}: Suppose that the proposal distribution for tokens $q_{\theta}^{K}(z_{1:t}) = p_{\theta}(z_{1:t})$ some $K \geq t$. Then the equality also hold for the proposal distribution on pass $K+1$. This follows trivially as tokens $z_{1:t}$ are ``copied'' from pass $K$ to $K+1$ and thus their likelihood is unchanged,
    \begin{align*}
        q_{\theta}^{K+1}(z_{1:t})
        = q_{\theta}^{K}(z_{1:t})
        = p_{\theta}(z_{1:t})
    \end{align*}

    \textit{Induction over $t$}: Suppose that the proposal distribution matches random sample decoding's for the first $t$ tokens for $t = K$; that is, $q_{\theta}^{K}(z_{1:t}) = p_{\theta}(z_{1:t})$. We show that the statement holds for pass $K+1$ for tokens $z_{1:t+1}$. First, recall that by construction the proposal distribution for token $z_{t+1}$ given previous tokens $z_{1:t}$ is the same as random sampling's when $t \geq K$,
    \begin{align*}
        q_{\theta}^{K+1}(z_{t+1} | z_{1:t}) = p_{\theta}(z_{t+1} | z_{1:t})
    \end{align*}

    Note that this only holds when $p = 1$. Then,
    \begin{align*}
        q_{\theta}^{K+1}(z_{1:t+1})
        &= q_{\theta}^{K+1}(z_{t+1} | z_{1:t})
            q_{\theta}^{K+1}(z_{1:t}) \\
        &= q_{\theta}^{K+1}(z_{t+1} | z_{1:t})
            q_{\theta}^{K}(z_{1:t}) \\
        &= p_{\theta}(z_{t+1} | z_{1:t})
            p_{\theta}(z_{1:t}) \\
        &= p_{\theta}(z_{1:t+1})
    \end{align*}

    Where we use the chain rule, induction over $K$ for $z_{1:t}$, the inductive assumption for $q_{\theta}^{K}(z_{1:t})$, and the definition of $q_{\theta}^{K+1}(z_{t+1} | z_{1:t})$ when $t \geq K$.

    \end{proof}

    \section{Related Work}
    Professor forcing \citep{professor} has a similar motivation as Scheduled Sampling, where a discriminator network is  trained jointly with the generator to distinguish between generator's hidden states produced by conditioning on ground-truth and model prediction sample. The generator apart from maximizing the likelihood of the data is also trained to fool the discriminator \citep{gan}. With this new objective, the dynamics of the generator would be the same for conditioning on both ground-truth and model prediction. Our parallel sampling contribution is orthogonal to professor forcing and can be potentially applied in their framework. \citet{inc-perceptron} use beam search which is a sequential search procedure during both during training and testing time, and update the weights of the model using a variant of the Perceptron algorithm \citep{perceptron}. Methods with similar motivation of mitigating the discrepancy between train and test time behavior have also been studied in the sequential decision making, and reinforcement learning setting \citep{searn,dagger}.

    \section{Experiments}
    We evaluate our proposed technique on image and text domains. In the text domain, we evaluate Parallel Scheduled Sampling on text summarization \citep{wiki}, task-oriented dialog response generation \citep{multiwoz}, and machine translation \citep{seq2seq,vaswani2017attention} and compare it to teacher-forced training. We further evaluate the proposed technique on image generation on CIFAR-10. Since our procedure is intended to generate better sequences, we evaluate it at the sequence level and not at the word token or pixel level.   We compare our method to Sequential Scheduled Sampling only on the dialog task \citep{multiwoz} as we find runtime infeasible on all other tasks. We also conduct ablation studies on the dialog task. We use the Tensor2Tensor framework for all experiments \citep{tensor2tensor}.

    \subsection{Dialog Response Generation}

    \begin{table}[]
      \centering
      \begin{tabular}{cccccccc}
        \toprule
        Training  & Mixing  & Num & Warm-up  & Schedule & Mean & Max  & Training \\
        Method & Prob & Passes & Steps  & & BLEU  & BLEU  &    Steps/Sec \\
        \midrule
        Teacher-Forcing & - & - & - & - & 14.11 & 14.62 & 47   \\
        \midrule
        Sequential SS & 0.25 & - & 25k & exp & \textbf{14.35} & - & 0.13  \\
        Sequential SS & 0.50 & - & 25k & exp & 13.84 & - & 0.13  \\
        Sequential SS & 0.75 & - & 25k & exp  & 12.97 & - & 0.13  \\
        Sequential SS & 1.00 & - & 25k & exp  & 3.95 & - & 0.13  \\
        \midrule
        Parallel SS & 0.25 & 1 & 25k & exp & 14.13 & 14.50 & 35  \\
        Parallel SS & 0.50 & 1 & 25k & exp & \textbf{14.55} & \textbf{14.74} & 35  \\
        Parallel SS & 0.75 & 1 & 25k & exp & 14.06 & 14.32 & 35  \\
        Parallel SS & 1.00 & 1 & 25k & exp &  5.63 &  6.24 & 35 \\
        \midrule
        Parallel SS & 0.5 & 2  & 25k & exp & \textbf{14.60}  & 14.75  & 27  \\
        Parallel SS & 0.5 & 3  & 25k & exp & 14.32  & 14.73  & 23 \\
        Parallel SS & 0.5 & 5  & 25k & exp & 14.33  & 14.70  & 17 \\
        Parallel SS & 0.5 & 7  & 25k & exp & 14.56  & 15.07  & 14 \\
        Parallel SS & 0.5 & 10 & 25k & exp & 14.55  & \textbf{15.21}  & 10 \\
        \midrule
        Parallel SS & 0.5 & 1 & 10k & exp & 14.24  & 14.88  & 35 \\
        Parallel SS & 0.5 & 1 & 15k & exp & 14.56  & 14.98  & 35 \\
        Parallel SS & 0.5 & 1 & 20k & exp & 14.52  & 15.01  & 35 \\
        Parallel SS & 0.5 & 1 & 30k & exp & 14.66  & 14.98  & 35 \\
        Parallel SS & 0.5 & 1 & 35k & exp & 14.56  & 15.11  & 35 \\
        Parallel SS & 0.5 & 1 & 40k & exp & \textbf{14.73}  & \textbf{15.38}  & 35 \\
        \midrule
        Parallel SS & 0.5 & 1 & 25k & linear  & \textbf{14.49} & \textbf{14.76}  & 35  \\
        Parallel SS & 0.5 & 1 & 25k & sigmoid & 14.30 & 14.66  & 35 \\
        \bottomrule
      \end{tabular}
      \caption{Results from models trained with teacher-forcing, Sequential Scheduled Sampling, and Parallel Scheduled Sampling. We report mean BLEU and maximum BLEU over 5 random restarts for each configuration except Sequential Scheduled Sampling, for which we report a single run. We provide results by varying different hyperparameters for both variants of Scheduled Sampling. We also provide training steps per second for the different training algorithms.  In the best setting, Parallel Scheduled Sampling achieves 1.6 BLEU score (11.5\%) improvement over teacher-forcing.}
      \label{table: multiwoz}
    \end{table}

    We evaluate our method on  text response generation task using MultiWOZ \citep{multiwoz}, a task-oriented dialog dataset. Here, we consider the problem of mapping conversation history consisting of alternating user and assistant turns to a single turn of assistant response. We use a Transformer model containing approximately one million parameters for this study as the dataset is much smaller (approximately 100k training examples) than those in other experiments. We truncate the length of the input and output to 512, and train all the models for 50k steps. As both model and dataset are small, we are able to empirically compare our method to Sequential Scheduled Sampling (such experiments are infeasible in larger models). Table~\ref{table: multiwoz} summarizes results for all experiments on the MultiWOZ dataset.

    Both Sequential Scheduled Sampling and Parallel Scheduled Sampling (with just one pass) achieve better results than teacher-forced trained models. However, as can be seen in Table~\ref{table: multiwoz}, Parallel Scheduled Sampling and teacher-forcing are both \textit{two orders of magnitude} faster to train than Sequential Scheduled Sampling. A single pass of Parallel Scheduled Sampling is approximately 25\% slower than teacher-forced training while producing the  benefits of Sequential Scheduled Sampling. Table~\ref{table: multiwoz} also shows the impact of mixing probability, number of passes, warm-up steps, and the mixing probability schedule \citep{bengio2015scheduled} on model performance. Overall, we find a single pass with 50\% gold/sampled mixing probability sufficient for improving performance. In the best setting, Parallel Scheduled Sampling achieves 1.6 BLEU score (11.5\%) improvement over teacher-forcing. 


    \subsection{Summarization}

    \begin{table}[]
      \centering
      \begin{tabular}{ccccccc}
        \toprule
        Model Size & Max Length & Training Method    & Decoding Method & ROUGE-2 & ROUGE-L \\
        \midrule
        Base & 500 & Teacher-Forcing & Beam Search & 24.74 & 34.42 \\
        Base & 500 & Parallel SS & Beam Search & \textbf{25.19} & \textbf{34.76} \\
        Base & 500 & Teacher-Forcing & Greedy & 20.18 & 28.66 \\
        Base & 500 & Parallel SS & Greedy & 22.09 & 31.16 \\
        \midrule
        Large & 1500 & Teacher-Forcing & Beam Search & \textbf{30.98} & \textbf{39.83} \\
        Large & 1500 & Parallel SS & Beam Search & 30.35 & 39.09 \\
        Large & 1500 & Teacher-Forcing & Greedy & 29.25 & 37.91 \\
        Large & 1500 & Parallel SS & Greedy & 29.42 & 38.35 \\
        \bottomrule
      \end{tabular}
      \caption{Performance on the summarization task using base and large Transformer when trained with teacher-forcing and Parallel Scheduled Sampling. We consider both beam search and greedy decoding. We adopt the widely-used ROUGE score as the evaluation metric (higher the better).}
      \label{table: summarization}
    \end{table}

    \citet{wiki} propose a multi-document summarization task, where the task is to generate the text of a Wikipedia article given its references and other related documents. The dataset has close to 1.9 million training examples, and 230,000 test examples. We use a Transformer seq2seq model for this task in two hyper-parameter settings: a base model with 60 million parameters and a large model with 210 million parameters. For the base model, we restrict the maximum length of input and output to be 500, while for the large model the maximum length is set to 1500.

    Table~\ref{table: summarization} shows the results of training base and large Transformer models for the summarization task. The base and large models were trained for 250k steps and 500k steps respectively. We use teacher-forcing for the first 50\% of training steps in Parallel Scheduled Sampling as warm-up steps. The mixing probability is set to 50\% and we perform a single pass of sampling and mixing (Algorithm \ref{alg:par_ss}). With the base model, Parallel Scheduled Sampling obtains better performance than teacher-forcing with both beam search and greedy decoding while it performs better only with greedy decoding when the large model is used. Since we use models that are much bigger than the ones in the dialog task discussed before, it is runtime infeasible to apply Sequential Schedule Sampling here.  

  \subsection{Image Generation}

    
    \begin{table}[]
      \centering
      \begin{tabular}{cccccccc}
        \toprule
        Training  & Mixing  & Num    & Warm-up  & Schedule                        & FID ($\downarrow$) & IS ($\uparrow$)  \\
        Method    & Prob    & Passes & Steps    &           &                    &                  \\
        \midrule
        Ground Truth & - & - & - & -  & 0.0 & 11.31 \\
        \midrule
        Teacher-Forcing & - & - & - & -  & 52.39 & 5.64 \\
        \midrule
        Parallel SS & 0.10 & 1 & 100k & exp  & 64.88 & 5.36  \\
        Parallel SS & 0.25 & 1 & 100k & exp  & 48.85 & 6.08  \\
        Parallel SS & 0.50 & 1 & 100k & exp  & \textbf{41.47} & 6.42  \\
        Parallel SS & 0.75 & 1 & 100k & exp  & 43.38 & \textbf{6.48}  \\
        Parallel SS & 1.00 & 1 & 100k & exp  & 98.38 & 3.97  \\
        \bottomrule
      \end{tabular}
      \caption{Empirical results on CIFAR-10. We use Frechet Inception Distance (FID) \citep{fid} and Inception Score (IS) \citep{is} metrics to evaluate the quality of the samples from teacher-forcing and Parallel Scheduled Sampling. We provide upper-bound scores by computing the metric on the ground-truth data. Parallel Scheduled Sampling with a single pass and a mixing probability of 50\% significantly decreases FID by 20\% and increases IS by 13.8\% compared to the baseline.}
      \label{table:cifar10}
    \end{table}

    In the image domain, we evaluate our method for image generation on the CIFAR-10 dataset. We compare class-conditional Image Transformer \citep{image_transformer} trained with teacher-forcing and Parallel Scheduled Sampling.  After training, we decode a total of 50,000 randomly-sampled images conditioned on classes drawn from the training set. In additional to a baseline model, we compare all metrics to ground truth samples from the CIFAR-10 training set. We evaluate the image samples using Frechet Inception Distance (FID) \citep{fid} and Inception Score (IS) \citep{is} metrics. These metrics have been widely used to evaluate the quality of image samples from GANs \citep{metric_1,metric_2,metric-3, metric-4}. 
    
    Table~\ref{table:cifar10} compares teacher-forcing with Parallel Scheduled Sampling on image generation.  We train the 50 million parameter Image Transformer model for 200K steps in both the cases. We find that Parallel Scheduled Sampling with a single pass and a mixing probability of 50\% significantly decreases FID by 20\% and increases IS by 13.8\% compared to the baseline.  Similarly to our summarization experiment, it is runtime infeasible to apply Sequential Schedule Sampling here.

    
    \subsection{Machine Translation}
    We evaluate our method on the WMT 2014 English-German task which consists of approximately 4.5 million training sentences. We experiment with the large Transformer model that contains approximately 210 million parameters. We did not see performance improvements by using Parallel Scheduled Sampling. The model trained with teacher-forcing for 500k steps gets 28.74 BLEU. The same model trained with 250k warm-up steps using teacher-forcing and the next 250k steps trained with Parallel Scheduled Sampling with mixing probability set to 50\% and a single pass of sampling and mixing (Algorithm \ref{alg:par_ss}) obtains 28.57 BLEU. Hyper-parameter tuning of warm-up steps and mixing probability did not improve performance. We hypothesize the lack of performance improvement may be due to the fact that the summarization, dialog response generation and image generation tasks have much longer output sequences than in machine translation, though further investigation is required.

    \section{Conclusion}
    We introduce a simple technique to parallelize Scheduled Sampling that allows Schedule Sampling to be applied for training models with hundreds of millions of parameters on large datasets. The technique  potentially mitigates discrepancy between train and test time in autoregressive sequence generation models. We find that in most cases our technique leads to better empirical performance on summarization, dialog generation, and image generation compared to teacher-forced training. Our empirical results indicate that Parallel Scheduled Sampling can potentially improve the performance of autoregressive sequence generation models particularly  on tasks containing long sequences.

    \bibliographystyle{plainnat}
    \bibliography{main}

\begin{thebibliography}{29}
\providecommand{\natexlab}[1]{#1}
\providecommand{\url}[1]{\texttt{#1}}
\expandafter\ifx\csname urlstyle\endcsname\relax
  \providecommand{\doi}[1]{doi: #1}\else
  \providecommand{\doi}{doi: \begingroup \urlstyle{rm}\Url}\fi

\bibitem[Bengio et~al.(2015)Bengio, Vinyals, Jaitly, and
  Shazeer]{bengio2015scheduled}
Samy Bengio, Oriol Vinyals, Navdeep Jaitly, and Noam Shazeer.
\newblock Scheduled sampling for sequence prediction with recurrent neural
  networks.
\newblock \emph{NeurIPS}, 2015.

\bibitem[Budzianowski et~al.(2018)Budzianowski, Wen, Tseng, Casanueva, Ultes,
  Ramadan, and Gasic]{multiwoz}
Pawe{\l} Budzianowski, Tsung-Hsien Wen, Bo-Hsiang Tseng, I{\~n}igo Casanueva,
  Stefan Ultes, Osman Ramadan, and Milica Gasic.
\newblock Multiwoz - a large-scale multi-domain wizard-of-oz dataset for
  task-oriented dialogue modelling.
\newblock \emph{EMNLP}, 2018.

\bibitem[Cho et~al.(2014)Cho, van Merrienboer, G{\"{u}}l{\c{c}}ehre, Bougares,
  Schwenk, and Bengio]{enc-dec}
Kyunghyun Cho, Bart van Merrienboer, {\c{C}}aglar G{\"{u}}l{\c{c}}ehre, Fethi
  Bougares, Holger Schwenk, and Yoshua Bengio.
\newblock Learning phrase representations using {RNN} encoder-decoder for
  statistical machine translation.
\newblock \emph{EMNLP}, 2014.

\bibitem[Collins and Roark(2004)]{inc-perceptron}
Michael Collins and Brian Roark.
\newblock Incremental parsing with the perceptron algorithm.
\newblock \emph{ACL}, 2004.

\bibitem[Daum{\'e} et~al.(2009)Daum{\'e}, Langford, and Marcu]{searn}
Hal Daum{\'e}, Iii, John Langford, and Daniel Marcu.
\newblock Search-based structured prediction.
\newblock \emph{Machine Learning}, 2009.

\bibitem[Finn et~al.(2016)Finn, Goodfellow, and Levine]{video}
Chelsea Finn, Ian Goodfellow, and Sergey Levine.
\newblock Unsupervised learning for physical interaction through video
  prediction.
\newblock \emph{NeurIPS}, 2016.

\bibitem[Goodfellow et~al.(2014)Goodfellow, Pouget-Abadie, Mirza, Xu,
  Warde-Farley, Ozair, Courville, and Bengio]{gan}
Ian Goodfellow, Jean Pouget-Abadie, Mehdi Mirza, Bing Xu, David Warde-Farley,
  Sherjil Ozair, Aaron Courville, and Yoshua Bengio.
\newblock Generative adversarial nets.
\newblock \emph{NeurIPS}, 2014.

\bibitem[Heusel et~al.(2017)Heusel, Ramsauer, Unterthiner, Nessler, and
  Hochreiter]{fid}
Martin Heusel, Hubert Ramsauer, Thomas Unterthiner, Bernhard Nessler, and Sepp
  Hochreiter.
\newblock Gans trained by a two time-scale update rule converge to a local nash
  equilibrium.
\newblock \emph{NeurIPS}, 2017.

\bibitem[Holtzman et~al.(2019)Holtzman, Buys, Forbes, and
  Choi]{holtzman2019curious}
Ari Holtzman, Jan Buys, Maxwell Forbes, and Yejin Choi.
\newblock The curious case of neural text degeneration.
\newblock \emph{arXiv}, 2019.

\bibitem[Jouppi et~al.(2017)Jouppi, Young, Patil, Patterson, Agrawal, Bajwa,
  Bates, Bhatia, Boden, Borchers, et~al.]{jouppi2017datacenter}
Norman~P Jouppi, Cliff Young, Nishant Patil, David Patterson, Gaurav Agrawal,
  Raminder Bajwa, Sarah Bates, Suresh Bhatia, Nan Boden, Al~Borchers, et~al.
\newblock In-datacenter performance analysis of a tensor processing unit.
\newblock \emph{ACM/IEEE 44th Annual International Symposium on Computer
  Architecture (ISCA)}, 2017.

\bibitem[Karras et~al.(2018)Karras, Aila, Laine, and Lehtinen]{metric-3}
Tero Karras, Timo Aila, Samuli Laine, and Jaakko Lehtinen.
\newblock Progressive growing of gans for improved quality, stability, and
  variation.
\newblock \emph{ICLR}, 2018.

\bibitem[Lamb et~al.(2016)Lamb, ALIAS PARTH~GOYAL, Zhang, Zhang, Courville, and
  Bengio]{professor}
Alex~M Lamb, Anirudh~Goyal ALIAS PARTH~GOYAL, Ying Zhang, Saizheng Zhang,
  Aaron~C Courville, and Yoshua Bengio.
\newblock Professor forcing: A new algorithm for training recurrent networks.
\newblock \emph{NeurIPS}, 2016.

\bibitem[Liu et~al.(2018)Liu, Saleh, Pot, Goodrich, Sepassi, Kaiser, and
  Shazeer]{wiki}
Peter~J. Liu, Mohammad Saleh, Etienne Pot, Ben Goodrich, Ryan Sepassi, Lukasz
  Kaiser, and Noam Shazeer.
\newblock Generating wikipedia by summarizing long sequences.
\newblock \emph{ICLR}, 2018.

\bibitem[Lucic et~al.(2018)Lucic, Kurach, Michalski, Gelly, and
  Bousquet]{metric_1}
Mario Lucic, Karol Kurach, Marcin Michalski, Sylvain Gelly, and Olivier
  Bousquet.
\newblock Are gans created equal? a large-scale study.
\newblock \emph{NeurIPS}, 2018.

\bibitem[Mikolov et~al.(2010)Mikolov, Karafi{\'a}t, Burget, Cernock{\'y}, and
  Khudanpur]{rnn}
Tomas Mikolov, Martin Karafi{\'a}t, Luk{\'a}s Burget, Jan~Honza Cernock{\'y},
  and Sanjeev Khudanpur.
\newblock Recurrent neural network based language model.
\newblock \emph{INTERSPEECH}, 2010.

\bibitem[Miwa and Bansal(2016)]{kb}
Makoto Miwa and Mohit Bansal.
\newblock End-to-end relation extraction using lstms on sequences and tree
  structures.
\newblock \emph{ACL}, 2016.

\bibitem[Miyato et~al.(2018)Miyato, Kataoka, Koyama, and Yoshida]{metric_2}
Takeru Miyato, Toshiki Kataoka, Masanori Koyama, and Yuichi Yoshida.
\newblock Spectral normalization for generative adversarial networks.
\newblock \emph{ICLR}, 2018.

\bibitem[Parmar et~al.(2018)Parmar, Vaswani, Uszkoreit, Kaiser, Shazeer, and
  Ku]{image_transformer}
Niki~J. Parmar, Ashish Vaswani, Jakob Uszkoreit, Lukasz Kaiser, Noam Shazeer,
  and Alexander Ku.
\newblock Image transformer.
\newblock \emph{ICML}, 2018.

\bibitem[Rosenblatt(1958)]{perceptron}
F.~Rosenblatt.
\newblock The perceptron: A probabilistic model for information storage and
  organization in the brain.
\newblock \emph{Psychological Review}, 1958.

\bibitem[Ross et~al.(2011)Ross, Gordon, and Bagnell]{dagger}
Stephane Ross, Geoffrey Gordon, and Drew Bagnell.
\newblock A reduction of imitation learning and structured prediction to
  no-regret online learning.
\newblock \emph{AISTATS}, 2011.

\bibitem[Salimans et~al.(2016)Salimans, Goodfellow, Zaremba, Cheung, Radford,
  Chen, and Chen]{is}
Tim Salimans, Ian Goodfellow, Wojciech Zaremba, Vicki Cheung, Alec Radford,
  Xi~Chen, and Xi~Chen.
\newblock Improved techniques for training gans.
\newblock \emph{NeurIPS}, 2016.

\bibitem[Sigtia et~al.(2016)Sigtia, Benetos, and Dixon]{piano}
Siddharth Sigtia, Emmanouil Benetos, and Simon Dixon.
\newblock An end-to-end neural network for polyphonic piano music
  transcription.
\newblock \emph{IEEE/ACM Trans. Audio, Speech and Lang. Proc.}, 2016.

\bibitem[Sutskever et~al.(2014)Sutskever, Vinyals, and Le]{seq2seq}
Ilya Sutskever, Oriol Vinyals, and Quoc~V. Le.
\newblock Sequence to sequence learning with neural networks.
\newblock \emph{NeurIPS}, 2014.

\bibitem[van~den Oord et~al.(2016{\natexlab{a}})van~den Oord, Kalchbrenner, and
  Kavukcuoglu]{pixelrnn}
A{\"{a}}ron van~den Oord, Nal Kalchbrenner, and Koray Kavukcuoglu.
\newblock Pixel recurrent neural networks.
\newblock \emph{ICML}, 2016{\natexlab{a}}.

\bibitem[van~den Oord et~al.(2016{\natexlab{b}})van~den Oord, Dieleman, Zen,
  Simonyan, Vinyals, Graves, Kalchbrenner, Senior, and Kavukcuoglu]{wavenet}
Aäron van~den Oord, Sander Dieleman, Heiga Zen, Karen Simonyan, Oriol Vinyals,
  Alexander Graves, Nal Kalchbrenner, Andrew Senior, and Koray Kavukcuoglu.
\newblock Wavenet: A generative model for raw audio.
\newblock \emph{Arxiv}, 2016{\natexlab{b}}.

\bibitem[Vaswani et~al.(2017)Vaswani, Shazeer, Parmar, Uszkoreit, Jones, Gomez,
  Kaiser, and Polosukhin]{vaswani2017attention}
Ashish Vaswani, Noam Shazeer, Niki Parmar, Jakob Uszkoreit, Llion Jones,
  Aidan~N Gomez, {\L}ukasz Kaiser, and Illia Polosukhin.
\newblock Attention is all you need.
\newblock 2017.

\bibitem[Vaswani et~al.(2018)Vaswani, Bengio, Brevdo, Chollet, Gomez, Gouws,
  Jones, Kaiser, Kalchbrenner, Parmar, Sepassi, Shazeer, and
  Uszkoreit]{tensor2tensor}
Ashish Vaswani, Samy Bengio, Eugene Brevdo, Francois Chollet, Aidan~N. Gomez,
  Stephan Gouws, Llion Jones, \L{}ukasz Kaiser, Nal Kalchbrenner, Niki Parmar,
  Ryan Sepassi, Noam Shazeer, and Jakob Uszkoreit.
\newblock Tensor2tensor for neural machine translation.
\newblock \emph{CoRR}, 2018.

\bibitem[Williams and Zipser(1989)]{williams1989learning}
Ronald~J Williams and David Zipser.
\newblock A learning algorithm for continually running fully recurrent neural
  networks.
\newblock \emph{Neural computation}, 1989.

\bibitem[Zhang et~al.(2018)Zhang, Goodfellow, Metaxas, and Odena]{metric-4}
Han Zhang, Ian~J. Goodfellow, Dimitris~N. Metaxas, and Augustus Odena.
\newblock Self-attention generative adversarial networks.
\newblock \emph{ICML}, 2018.

\end{thebibliography}
\end{document}